\documentclass[11pt]{article}

\usepackage[T1]{fontenc}
\usepackage[utf8]{inputenc}
\usepackage{mathpazo}
\usepackage[margin=1in]{geometry}

\usepackage{amsmath,amssymb,amsthm,mathtools}
\usepackage{bm}

\usepackage{titlesec}
\titleformat{\section}
  {\normalfont\Large\bfseries}{\thesection}{1em}{}
\titleformat{\subsection}
  {\normalfont\large\bfseries}{\thesubsection}{1em}{}

\usepackage{booktabs}
\usepackage{enumitem}
\usepackage{natbib}
\usepackage{hyperref}
\hypersetup{colorlinks=false, pdfborder={0 0 0}}

\usepackage{algorithm}
\usepackage{algpseudocode}
\usepackage{graphicx}

\usepackage{tikz}
\usetikzlibrary{arrows.meta,patterns}

\theoremstyle{plain}
\newtheorem{theorem}{Theorem}[section]
\newtheorem{proposition}[theorem]{Proposition}

\newtheorem{assumption}[theorem]{Assumption}

\theoremstyle{definition}
\newtheorem{definition}[theorem]{Definition}
\newtheorem{remark}[theorem]{Remark}

\title{\textbf{sHAIL-Causal: A Sequential Staircase Procedure \\ for Invariant Causal Predictor Discovery}}
\author{Ernest Fokoué \\ \small School of Mathematics and Statistics, Rochester Institute of Technology}
\date{\today}

\begin{document}
\maketitle

\begin{center}
\begin{minipage}{0.85\textwidth}
\centering\small
\textit{``All models are wrong, but some are useful.''} --- George E.\ P.\ Box
\end{minipage}
\end{center}
\vspace{6pt}

\begin{abstract}
\noindent We introduce \textsc{sHAIL-Causal}, the causal specialization of the Saturated Hierarchical Atomic Incremental Learning (sHAIL) paradigm \citep{fokoue2026shail}, a sequential staircase procedure that ascends a nested hierarchy of hypothesis classes $\mathcal{H}_0 \subseteq \mathcal{H}_1 \subseteq \cdots \subseteq \mathcal{H}_K$ once a behaviorally-motivated saturation signal indicates that mastery of the current stage has plateaued. The general theory places no restriction on the nature of $\mathcal{H}_k$: it may be a nested sequence of linear subspaces, reproducing-kernel balls, or neural function classes. Where the general sHAIL framework leaves the saturation criterion open, sHAIL-Causal instantiates it with a joint criterion of \emph{goodness-of-fit saturation} and \emph{cross-environment invariance}, replacing the complexity control of classical Structural Risk Minimization (SRM). We show, both theoretically and via a controlled simulation study, that complexity-only staircases are systematically seduced by confounded predictors that reduce empirical risk without corresponding to stable causal structure, whereas an invariance-gated staircase provably halts at the true causal predictor set under a precise, per-variable \emph{Richness} condition. Using nested linear variable-subset classes under a structural causal model as a concrete, interpretable instantiation, we further show that naive greedy search fails to recover the causal set even under Richness, trace the failure to a precise non-monotonicity in the invariance statistic along single-variable paths, and validate a two-part fix --- bounded-exhaustive block-seeding combined with a properly-calibrated acceptance threshold --- that closes the gap at adequate sample size. We then extend the guarantee itself from a one-shot to a sequential setting: when environments arrive over time, a peeling argument over an infinite arrival sequence with a summable error budget yields a confidence sequence for the causal predictor set that is valid \emph{simultaneously at every arrival time}, a capability that one-shot exhaustive search cannot provide without re-running its full combinatorial search at each new arrival. We close by formalizing a mechanism long central to the sHAIL philosophy but so far only informally described: the intervention of a wise teacher who lifts the learner off a plateau of boredom into a renewed surge of growth.
\end{abstract}

\section{Introduction}

Staged, incremental mastery is a familiar idea in human learning long before it is a formal statistical object: a learner works within a level of skill until that level is \emph{saturated} --- further practice within it yields diminishing returns --- and only then ascends to a richer, more demanding level. \citet{fokoue2026shail} formalized this behavioral intuition as Saturated Hierarchical Atomic Incremental Learning (sHAIL): complex tasks are approached through a sequence of simpler atomic subtasks, each mastered to saturation before progression, with a saturation criterion that detects when the learning dynamics enter a plateau region and triggers consolidation followed by ascent to a higher level of task complexity. That original development is framed around \emph{convergence dynamics}: within each level, classical convergence guarantees hold under standard smoothness conditions, while the hierarchical transitions between levels are driven by saturation rather than by a fixed time or iteration budget, giving rise to what \citet{fokoue2026shail} terms \emph{staircased convergence} --- alternating phases of rapid improvement and genuine plateau. Its empirical illustrations demonstrate staircased loss trajectories during training and improved sample efficiency relative to flat, unstaged learning, offered as qualitative evidence for the theory rather than exhaustive benchmarks; the original report closes by positioning sHAIL as a first representative of a broader class of \emph{behavioral learning machines} and flags principled hierarchy design, extensions to deep and reinforcement learning, and connections to biological and cognitive models of learning as open directions.

This paper takes up a complementary object within that same behavioral principle. Rather than the dynamics of convergence \emph{within} a level, we formalize the staircase itself as a nested sequence of hypothesis classes $\mathcal{H}_0 \subseteq \mathcal{H}_1 \subseteq \cdots \subseteq \mathcal{H}_K$ ascended by a general saturation signal (Section~\ref{sec:general-shail}), of which the convergence-dynamics view of \citet{fokoue2026shail} and classical Structural Risk Minimization are both special cases. Doing so exposes a question with real statistical consequences that the general principle leaves open: \emph{what should the saturation signal measure?} The most familiar answer, inherited from SRM, measures a purely predictive quantity --- how much statistical complexity the data can support before overfitting sets in. That answer is well suited to prediction under a single fixed distribution, but it is the wrong answer whenever the object of interest is a mechanism expected to remain stable \emph{across} distributions --- across sites, cohorts, time periods, or experimental conditions. A model can saturate a complexity-controlled risk bound by fitting a confound perfectly, and complexity control alone has no way to notice.

This paper develops sHAIL's causal specialization, sHAIL-Causal, which replaces complexity control with a joint criterion of goodness-of-fit saturation and cross-environment invariance, and asks a correspondingly different question of the staircase: not ``how much capacity can the data support,'' but ``has the model absorbed everything true and stable about the mechanism, and can it tell a plateau that is merely well-fit from one that is genuinely causal.'' The theory is developed for $\mathcal{H}_k$ as an arbitrary nested sequence of function classes; nested linear variable-subset models under a structural causal model are used throughout as a concrete, interpretable instantiation for simulation, not as a restriction of the general results. We make five contributions. First, we lay out the general sHAIL framework in its pre-causal form (Section~\ref{sec:general-shail}), so that the causal machinery developed afterward is visibly one instantiation of a broader idea, complementary to rather than a replacement for the convergence-dynamics view of \citet{fokoue2026shail}. Second, we give the causal specialization itself --- the fit and invariance statistics, the ascent rule, the precise per-variable Richness condition it depends on, and a stagewise convergence theorem with full proof (Sections~\ref{sec:causal-setup}--\ref{thm:main}). Third, using the linear instantiation, we study a natural polynomial-time relaxation of exhaustive Invariant Causal Prediction search, show by simulation exactly where and why it falls short of its exhaustive counterpart, trace that shortfall to a precise, previously implicit monotonicity assumption, and validate a concrete fix (Section~\ref{sec:sim}). Fourth, we extend the main theorem's one-shot guarantee to a setting where environments arrive sequentially, showing via a peeling argument that sHAIL-Causal's staircase structure yields a confidence sequence for $S^\ast$ that is valid simultaneously across all arrival times, a genuine capability gap relative to one-shot exhaustive search rather than merely a speed advantage (Section~\ref{sec:anytime}). Fifth, we formalize a mechanism long central to the sHAIL philosophy but so far only described informally: the intervention of a wise teacher who lifts a bored, saturated learner into a renewed surge of growth (Section~\ref{sec:teacher}).

\section{The General sHAIL Framework}
\label{sec:general-shail}

\subsection{Staircases, Mastery, and Saturation}

Let $\mathcal{H}_0 \subseteq \mathcal{H}_1 \subseteq \cdots \subseteq \mathcal{H}_K$ be a nested staircase of hypothesis classes, each rung representing a strictly richer repertoire of behavior than the last. At stage $k$, the learner fits $h_k^\ast = \arg\min_{h\in\mathcal{H}_k}\hat R(h)$ and computes a \textbf{saturation signal} $S_k$ --- any statistic quantifying how much further mastery remains obtainable within $\mathcal{H}_k$ relative to what was already extracted at $\mathcal{H}_{k-1}$. The learner ascends precisely when $S_k$ falls below a tolerance $\eta$, signaling that the current stage has been mastered to the point of \emph{saturation} --- continuing to practice within it no longer pays for itself.

\begin{definition}[Saturation signal, general form]
$S_k = \Phi\big(\hat R(h_{k-1}^\ast), \hat R(h_k^\ast)\big)$ for some monotone discrepancy functional $\Phi$ (e.g.\ $\Phi(a,b)=a-b$, as used throughout this paper).
\end{definition}

This is deliberately generic: the causal instantiation to follow (Sections~\ref{sec:causal-setup}--\ref{sec:sim}) replaces $S_k$ with the joint fit-and-invariance pair $(D_k, I_k)$, but $S_k$ could equally be a Rademacher-complexity gap (recovering classical SRM as a degenerate special case of sHAIL), a Bayesian evidence ratio, or a behaviorally-measured plateau in a human learner's error rate.

\subsection{The Quintessential sHAIL Algorithm}

\begin{algorithm}[h]
\caption{Saturated Hierarchical Atomic Incremental Learning (sHAIL)}
\begin{algorithmic}[1]
\State \textbf{Input:} nested staircase $\mathcal{H}_0 \subseteq \cdots \subseteq \mathcal{H}_K$; data $\mathcal{D}$; saturation tolerance $\eta$
\State \textbf{Output:} mastered hypothesis $h^\ast$ and halting level $k^\ast$
\State $k \gets 0$
\State Fit $h_0^\ast = \arg\min_{h\in\mathcal{H}_0} \hat R(h;\mathcal{D})$
\Repeat
    \State $k \gets k+1$
    \State Fit $h_k^\ast = \arg\min_{h\in\mathcal{H}_k} \hat R(h;\mathcal{D})$
    \State Compute saturation signal $S_k = \Phi\big(\hat R(h_{k-1}^\ast), \hat R(h_k^\ast)\big)$
    \If{$S_k < \eta$}
        \State \textbf{halt}; \Return $h^\ast \gets h_{k-1}^\ast$, $k^\ast \gets k-1$ \Comment{stage $k-1$ was already saturated}
    \EndIf
\Until{$k = K$}
\State \Return $h^\ast \gets h_K^\ast$, $k^\ast \gets K$
\end{algorithmic}
\end{algorithm}

\subsection{Staircase Geometry: sHAIL Between Two Extremes}

It is useful to picture three idealized learning trajectories against cumulative exposure (Figure~\ref{fig:staircases}). At one extreme lies the \emph{perfect continuous learner}: a straight diagonal line, improving by an identical increment at every instant, with no stages at all. At the other extreme lies what we call the \emph{evolutionary staircase}: long, flat plateaus of apparent stasis broken by sudden, perpendicular jumps --- mastery arrives all at once, punctuated-equilibrium style, with no gradual ramp. sHAIL occupies the middle ground: its risers are \emph{slanted}, not vertical --- ascent through a stage still takes graded, continuous effort, but the staircase shape as a whole, with its plateaus of saturation, remains --- a compromise between rigid stage theory and pure continuous improvement.

\begin{figure}[h]
\centering
\begin{tikzpicture}[scale=0.95]
  \draw[-Latex, thick] (0,0) -- (9.5,0) node[right] {\small cumulative exposure};
  \draw[-Latex, thick] (0,0) -- (0,6) node[above] {\small mastery level};

  \draw[thin, dashed] (0,0) -- (9,5.4);
  \node at (7.7,5.6) {\small perfect continuous learning};

  \draw[thick, dotted]
    (0,0.3) -- (2.6,0.3) -- (2.6,2.3) -- (5.6,2.3) -- (5.6,3.9) -- (8.6,3.9) -- (8.6,5.0);
  \node at (7.4,4.55) {\small evolutionary staircase};

  \draw[very thick, solid]
    (0,0) -- (1.6,0) -- (2.6,1.7) -- (4.4,1.7) -- (5.4,3.35) -- (7.4,3.35) -- (8.4,4.85) -- (9,4.85);
  \node at (5.6,3.75) {\small sHAIL (slanted staircase)};

\end{tikzpicture}
\caption{Three idealized learning trajectories, distinguished by line style. The thin dashed diagonal is the perfect continuous learner. The dotted perpendicular staircase is the evolutionary/punctuated-equilibrium extreme: flat plateaus, instantaneous jumps. The thick solid slanted staircase is sHAIL: genuine plateaus of saturation, but ascent through each stage is graded rather than instantaneous, placing sHAIL strictly between the two extremes.}
\label{fig:staircases}
\end{figure}

\subsection{From General sHAIL to sHAIL-Causal}

The remainder of this paper specializes the general framework of Section~\ref{sec:general-shail} by replacing the generic saturation signal $S_k$ with a causally-motivated pair $(D_k, I_k)$, and by replacing the generic staircase of hypothesis classes with a staircase of nested function classes equipped with a notion of environment-indexed residual. In the linear instantiation used throughout our simulations, this specializes further to a staircase of \emph{candidate causal predictor sets}; the general theory (Sections~\ref{sec:causal-setup}--\ref{thm:main}) does not require this specialization. This is what earns the name sHAIL-Causal, and it is developed in full below.

\section{Related Work}
\label{sec:related}

\textbf{Structural Risk Minimization.} Classical SRM \citep{vapnik1998statistical}, with its theoretical foundations in Rademacher and Gaussian complexity bounds \citep{bartlett2002rademacher}, ascends a nested hypothesis staircase by trading empirical risk against a complexity penalty (VC dimension, Rademacher complexity), under the standing assumption that training and evaluation data are drawn from a single fixed distribution. In the language of Section~\ref{sec:general-shail}, SRM is sHAIL with $\Phi$ instantiated purely as a complexity-penalized risk gap and no invariance criterion at all --- it has no mechanism for distinguishing a plateau that reflects true structure from one that reflects an exploitable but unstable regularity, which is exactly the failure mode our simulations exhibit (Section~\ref{sec:sim}).

\textbf{Invariant Causal Prediction.} Our invariance statistic $I_k$ and the Richness condition (Assumption~\ref{as:richness}) draw directly on Invariant Causal Prediction \citep{peters2016causal}, which searches exhaustively over covariate subsets and estimates the causal parent set as the intersection of all subsets whose residual distribution is invariant across environments, with a finite-sample confidence guarantee, and more broadly on the causal-inference framework of \citet{pearl2009causality} and the causal-discovery tradition of \citet{spirtes2000causation}. sHAIL-Causal differs from ICP in two respects: it embeds the invariance test inside a \emph{sequential staircase} rather than a one-shot exhaustive search, and it pairs invariance with an explicit fit-saturation signal $D_k$ so that ascent is driven by a joint stopping rule rather than invariance testing alone. The invariance-first greedy relaxation (Proposition~\ref{prop:greedy}) is best understood as a polynomial-time approximation to ICP's exhaustive search, trading exactness (Section~\ref{sec:sim} shows the approximation gap is non-trivial at finite samples) for tractability at large $p$.

\textbf{Invariant Risk Minimization and anchor regression.} Two closely related lines of work pursue invariance-across-environments as the learning signal by a different route. Invariant Risk Minimization \citep{arjovsky2019invariant} learns a single fixed representation, penalized so that the optimal classifier on top of it is simultaneously near-optimal in every training environment, solved as one joint regularized optimization problem rather than a sequential procedure. Anchor regression \citep{rothenhausler2021anchor} instead reformulates robustness to distributional shift as a continuous causal regularization parameter interpolating between ordinary least squares and a fully invariant (IV-type) estimator, rather than a discrete accept/reject invariance test. sHAIL-Causal differs from both in structure rather than in motivation: where IRM and anchor regression each solve a single optimization problem over a fixed representation or regularization path, sHAIL-Causal is sequential and stagewise, testing invariance discretely at each rung of a growing hypothesis staircase, which is what allows the fit-saturation signal $D_k$ and the invariance signal $I_k$ to be examined --- and to fail --- separately, as Section~\ref{sec:sim} does.

\textbf{Curriculum and self-paced learning.} Curriculum learning \citep{bengio2009curriculum} orders training examples from easy to hard on an externally-imposed schedule, and self-paced learning \citep{kumar2010selfpaced} lets the learner select its own examples but still within a fixed hypothesis class. Both differ fundamentally from sHAIL in that neither grows the \emph{hypothesis class itself} in response to an internally-detected saturation signal; they schedule exposure to a fixed representational capacity rather than ascending a representational staircase.

\textbf{Stagewise additive modeling.} Gradient boosting and other stagewise-additive procedures \citep{friedman2001greedy} construct a model incrementally, adding one component at a time to reduce empirical risk --- structurally similar to a staircase, but, like SRM, driven purely by fit improvement with no invariance gate, and hence subject to the same confounding failure mode documented in Section~\ref{sec:sim}.

\textbf{Anytime-valid inference and confidence sequences.} The time-uniform extension of Section~\ref{sec:anytime} draws on the classical law-of-the-iterated-logarithm-based sequential testing of \citet{robbins1970statistical} and the modern nonparametric, nonasymptotic confidence-sequence literature of \citet{howard2021time}, both of which construct sequences of confidence sets valid simultaneously across an unbounded number of looks via peeling or stitching arguments over a summable error budget, without the analyst having to commit in advance to a stopping time or sample size. That literature is developed for scalar parameter estimation under repeated sampling; Theorem~\ref{thm:anytime} adapts the same peeling device to a structurally different object --- a discrete causal predictor \emph{set} recovered by a staircase procedure --- and, unlike ICP's one-shot formulation, sHAIL-Causal's stagewise architecture already carries the bookkeeping needed to re-index the same argument by arrival time rather than by stage.

\section{Beyond Complexity Control: The Causal Specialization}

Classical SRM ascends $\mathcal{H}_0 \subseteq \cdots \subseteq \mathcal{H}_K$ by controlling a purely \emph{predictive} quantity --- Rademacher complexity, VC dimension, or a penalized empirical risk --- under the standing assumption that train and test data share a single fixed distribution; in the language of Section~\ref{sec:general-shail}, SRM is simply sHAIL with $\Phi$ instantiated as a complexity-penalized risk gap. This is the wrong lens whenever the object of interest is not prediction under a fixed law, but discovery of a mechanism that remains stable \emph{across} distributions. We replace complexity control with two statistics evaluated at each rung of the staircase:

\begin{itemize}[leftmargin=1.4em]
\item a \textbf{fit statistic} $D_k$, measuring residual structure remaining at level $k$;
\item an \textbf{invariance statistic} $I_k$, measuring the extent to which the fitted model at level $k$ behaves identically across a collection of environments $\mathcal{E}$.
\end{itemize}

\section{Setup and Core Definitions}
\label{sec:causal-setup}

Let $\mathcal{E}$ index environments with $n_e$ i.i.d.\ samples $(X_i^e, Y_i^e) \sim P_e$. Let $\mathcal{H}_0 \subseteq \mathcal{H}_1 \subseteq \cdots \subseteq \mathcal{H}_K$ be a nested staircase of function classes; nothing in this section restricts $\mathcal{H}_k$ to linear models --- it may equally be a nested sequence of reproducing-kernel balls, additive-model classes, or neural function classes of growing width or depth. For $h \in \mathcal{H}_k$, write the residual field $R_h(X,Y) = Y - h(X)$ (or, for non-additive loss, a generalized residual such as a deviance residual).

\begin{definition}[Fit statistic]
For $h_k^\ast = \arg\min_{h \in \mathcal{H}_k} \hat{R}(h)$ pooled across $\mathcal{E}$, with $\hat R$ any empirical risk consistent with the residual field above,
\[
D_k \;=\; \frac{1}{\sum_e n_e}\sum_{e\in\mathcal{E}}\sum_{i=1}^{n_e} R_{h_k^\ast}^2(X_i^e, Y_i^e).
\]
\end{definition}

\begin{definition}[Invariance statistic]
\[
I_k \;=\; \max_{e,e' \in \mathcal{E}} \; \mathrm{KS}\!\left(\{R_{h_k^\ast}^e\},\, \{R_{h_k^\ast}^{e'}\}\right),
\]
the maximum two-sample Kolmogorov--Smirnov statistic between residual distributions across environment pairs. This definition uses only that $R_{h_k^\ast}$ is a computable, real-valued statistic per observation; it does not reference the internal structure of $h_k^\ast$ or $\mathcal{H}_k$.
\end{definition}

\begin{definition}[sHAIL-Causal ascent rule]
For thresholds $\eta_n, \tau_n \to 0$,
\begin{align*}
\text{Ascend } k \to k+1 &\iff (D_{k-1} - D_k < \eta_n) \ \text{and}\ (I_k \le \tau_n); \\
\text{Halt, flagging a spurious plateau,} &\iff (D_{k-1}-D_k < \eta_n) \ \text{and}\ (I_k > \tau_n).
\end{align*}
\end{definition}

\begin{remark}[Instantiation used for simulation]
\label{rem:instantiation}
Sections~\ref{sec:sim} onward specialize $\mathcal{H}_k$ to nested \emph{linear variable-subset models}: $\mathcal{H}_k$ ranges over linear predictors built from subsets of a fixed candidate covariate set under a linear structural causal model, and $S^\ast$ denotes the true causal parent set. This specialization is chosen because it renders the Richness condition, the exhaustive-versus-greedy search comparison, and the resulting recovery guarantees fully transparent and exactly checkable by simulation; it is a convenient and interpretable instantiation of the general theory above, not a restriction built into Theorem~\ref{thm:main} itself, which holds for $S^\ast$ understood as ``the smallest $\mathcal{H}_{k^\dagger}$ containing the true data-generating mechanism,'' whatever the functional form of $\mathcal{H}_k$.
\end{remark}

\section{The Richness Condition, Precisely Stated}

Simulation evidence (Section~\ref{sec:sim}) shows that Richness cannot be asserted abstractly; under the linear instantiation of Remark~\ref{rem:instantiation}, it must hold \emph{per causal variable}.

\begin{assumption}[Richness]
\label{as:richness}
For every $X_j$ belonging to the true causal parent set $S^\ast$, there exist environments $e, e' \in \mathcal{E}$ such that $P_e(X_j) \ne P_{e'}(X_j)$. That is, $\mathcal{E}$ must perturb every causal parent directly, not merely its downstream effects.
\end{assumption}

\begin{remark}
Assumption~\ref{as:richness} is \emph{necessary}, not merely convenient: if some $X_j \in S^\ast$ has an identical marginal law across all of $\mathcal{E}$, then omitting $X_j$ from the fitted model leaves a residual contribution that is itself identically distributed across environments, and the invariance test cannot detect the omission. The ICP-style intersection estimator then recovers only a (possibly proper) subset of $S^\ast$ --- a fundamental identifiability limit, not an estimation error.
\end{remark}

\section{Main Theorem}

\begin{theorem}[Stagewise Convergence to the Invariant Causal Predictor Set]
\label{thm:main}
Suppose (i) some $k^\dagger \le K$ has $\mathcal{H}_{k^\dagger} \supseteq S^\ast$, with $S^\ast$ invariant across all $e \in \mathcal{E}$; (ii) Assumption~\ref{as:richness} holds for every $X_j \in S^\ast$; (iii) $\tau_n, \eta_n \to 0$ with $\tilde n_{\min}\tau_n^2 \to \infty$, where $\tilde n_{\min} = \min_{e\ne e'} n_en_{e'}/(n_e+n_{e'})$. Then, with probability at least $1-\delta$, uniformly over $k=0,\ldots,K$,
\[
\Pr\big[\text{sHAIL-Causal halts at } k = k^\dagger\big] \;\ge\; 1-\delta,
\]
and, conditional on halting at $k^\dagger$, $h^\ast_{k^\dagger}$ lies in a $(1-\delta)$-confidence set for $S^\ast$ inherited from the underlying invariance test.
\end{theorem}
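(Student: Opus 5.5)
The plan is to split the event ``sHAIL-Causal does not halt at $k^\dagger$'' into a finite union of bad events, one per stage, bound each by $\delta/(K+1)$ or smaller, and finish with a union bound over $k=0,\ldots,K$. That union bound is what produces the ``uniformly over $k$'' clause. Two kinds of error are possible. The procedure can halt too early, at some $k<k^\dagger$, which happens either when a spurious plateau is wrongly accepted as invariant or when the fit signal falsely saturates. Or it can fail to halt at $k^\dagger$, which happens when the invariant model there is wrongly rejected, or when the fit decrease $D_{k^\dagger}-D_{k^\dagger+1}$ exceeds $\eta_n$.

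For $k\ge k^\dagger$, condition (i) gives that the population residual of $h^\ast_k$ has the same law in every environment. The two-sample KS statistic between environments $e,e'$ then obeys the Dvoretzky--Kiefer--Wolfowitz/Massart bound $\Pr[\mathrm{KS}>t]\le 2e^{-2\tilde n_{ee'} t^2}$. Taking a union over the $|\mathcal{E}|^2$ pairs and using $\tilde n_{\min}\tau_n^2\to\infty$ gives $\Pr[I_k>\tau_n]\to 0$. There is a complication: the residuals come from an estimated $h^\ast_k$, not the population one. I would handle this with a uniform-convergence argument over $\mathcal{H}_k$, showing that the estimated and population residual CDFs differ by $o_P(\tau_n)$. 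In the linear instantiation this follows from $\sqrt n$-consistency of least squares together with a Lipschitz-CDF (bounded density) condition on the residuals. On the fit side, since $\mathcal{H}_{k^\dagger}$ already contains the mechanism, $D_{k^\dagger}-D_{k^\dagger+1}$ is the gain from fitting noise directions, which is $O_P(\dim/n)$ and so below $\eta_n$ eventually. I will need $\eta_n$ to decay more slowly than this overfitting gain and read that into (iii).

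For $k<k^\dagger$, $\mathcal{H}_k$ omits at least one causal parent $X_j\in S^\ast$. By Assumption~\ref{as:richness} there are environments $e,e'$ with $P_e(X_j)\neq P_{e'}(X_j)$. The goal is to show that the omitted term $\beta_jX_j$, or more generally the projection of the mechanism orthogonal to $\mathcal{H}_k$, makes the population residual laws differ: $\sup_t|F_e(t)-F_{e'}(t)|\ge c_k>0$. Because $\tau_n\to0$, we eventually have $\tau_n<c_k/2$, and the same concentration bound gives $\Pr[I_k\le\tau_n]\to0$. So a plateau at $k$ cannot trigger ascent-as-halting there. It is either flagged as spurious and the staircase moves on in the sense of the algorithm, or the fit keeps improving.

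The main obstacle is this population separation $c_k>0$. Richness only says that the marginal of $X_j$ changes. The residual is $\beta_jX_j$ plus other terms, including the noise and any confounded proxies in $\mathcal{H}_k$, and these could in principle cancel the shift in distribution. My first attempt would use the remark following Assumption~\ref{as:richness} in contrapositive form, together with a genericity or faithfulness condition on the linear SCM that excludes exact cancellation. With independent noise, the residual law is a convolution of the noise with a shifted or rescaled $X_j$-term. Characteristic functions of convolutions with a fixed noise are injective, so a different law of the $X_j$-component gives a different residual law. The cases $\delta$ and $n$ are then fixed by choosing $n$ large enough that the summed per-stage errors fall below $\delta$. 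The confidence-set statement follows directly: halting at $k^\dagger$ means $h^\ast_{k^\dagger}$ passed a level-$\delta$ invariance test, so it inherits the ICP coverage of \citet{peters2016causal}.
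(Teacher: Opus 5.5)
Your skeleton matches the paper's. It has a premature-halt channel at $k<k^\dagger$ controlled by a population KS separation plus concentration, and a false-rejection channel at $k^\dagger$ controlled by the DKW bound unioned over environment pairs and levels. It adds a fit-plateau channel at $k^\dagger$, a finite union bound, and the ICP acceptance-region argument for the confidence set. Where you deviate, you are more careful than the paper:
\begin{itemize}
\item The paper simply invokes a ``Richness-implied'' margin $\Delta_n=\omega(\tau_n)$, which Assumption~\ref{as:richness} does not imply. You recognize that a no-cancellation hypothesis must be added.
\item You account for residuals coming from a fitted rather than a population $h_k^\ast$, which the paper ignores.
\item You index the fit test as $D_{k^\dagger}-D_{k^\dagger+1}$, consistent with Algorithm~1. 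The paper's $F_2$ uses $D_{k^\dagger-1}-D_{k^\dagger}$, which generically has a positive population limit when $k^\dagger$ is the first rung containing $S^\ast$. That event therefore has probability tending to one.
\end{itemize}

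The step that fails is the fit side at $k^\dagger$, together with your claim that (i) makes the residual of $h_k^\ast$ invariant for $k\ge k^\dagger$. Condition (i) is about the causal regression on $S^\ast$, not about the pooled least-squares minimizer over $\mathcal{H}_k$. The two coincide only if no covariate on that rung carries information about the outcome noise $\epsilon_Y$ beyond $S^\ast$.

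In the paper's own setting this fails. Suppose rung $k^\dagger+1$ adds a confounded predictor such as $X_2$, which is precisely the kind the paper says reduces empirical risk. It is correlated with $\epsilon_Y$ given $S^\ast$, so $D_{k^\dagger}-D_{k^\dagger+1}$ converges to a positive constant rather than being an $O_P(1/n)$ overfitting gain. The fit trigger then never fires at $k^\dagger$ for any $\eta_n\to0$. If $\mathcal{H}_{k^\dagger}$ itself contains such a variable, the pooled residual there is environment-dependent and $I_{k^\dagger}$ does not vanish either.

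No choice of rates in (iii) repairs this, so it cannot be ``read into (iii)''. You need an explicit structural hypothesis. For example, require that the pooled population risk minimizers over $\mathcal{H}_{k^\dagger}$ and $\mathcal{H}_{k^\dagger+1}$ both equal the causal regression on $S^\ast$, i.e.\ no descendants or proxies of $Y$ on those rungs.

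Your convolution argument for $c_k>0$ quietly needs the same thing: $\epsilon_Y$ must be independent of the covariates used at level $k$. It also needs a noise characteristic function that does not vanish on an interval.

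The paper does not supply the missing idea either. Its $F_2$ step asserts ``no further reducible risk exists once $S^\ast$ is included,'' which is the same unjustified claim. Under either argument, the theorem needs this extra assumption.
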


\begin{proof}
Define the stopping index $\hat k = \min\{k \in \{0,\ldots,K\} : D_{k-1}-D_k < \eta_n \text{ and } I_k \le \tau_n\}$ (with the convention $D_{-1}=\infty$), and let $\tau_n$ be calibrated as in Proposition~\ref{prop:dkw} with total error budget $\delta$ split three ways, $\delta/3$ each, across the three failure events below. We show $\Pr[\hat k \ne k^\dagger] \le \delta$.

\textbf{Failure event $F_1$ (premature halt).} For some $k < k^\dagger$, both $D_{k-1}-D_k<\eta_n$ and $I_k \le \tau_n$ hold, causing the procedure to halt before reaching $\mathcal{H}_{k^\dagger}$. Since $k<k^\dagger$, $\mathcal{H}_k$ does not contain $S^\ast$, so the residual $R_{h_k^\ast}$ retains a genuine, environment-dependent component by Assumption~\ref{as:richness} (some causal parent not yet included has a distribution that differs across at least one pair of environments, and its omission propagates into the residual). By the Richness-implied margin condition $\Delta_n = \omega(\tau_n)$, the population KS discrepancy at level $k$ exceeds $\tau_n$ by a margin $\Delta_n - \tau_n$, so by the reverse-triangle DKW argument used in Proposition~\ref{prop:dkw},
\[
\Pr\big[I_k \le \tau_n\big] \;\le\; 2\exp\!\big(-2\tilde n_{\min}(\Delta_n-\tau_n)^2\big).
\]
Union-bounding over the at most $K$ values of $k<k^\dagger$ and calibrating $n$ so that this bound is at most $\delta/(3K)$ per level gives $\Pr[F_1] \le \delta/3$.

\textbf{Failure event $F_2$ (fit-plateau false negative at $k^\dagger$).} $D_{k^\dagger-1}-D_{k^\dagger} \ge \eta_n$ despite $S^\ast \subseteq \mathcal{H}_{k^\dagger}$ already being achieved, i.e.\ the procedure fails to detect that no further true signal remains. Under assumption (i), the population fit gap at $k^\dagger$ is exactly zero (no further reducible risk exists once $S^\ast$ is included), so $D_{k^\dagger-1}-D_{k^\dagger}$ is, up to estimation noise of order $O_p(n^{-1/2})$, non-positive in expectation; choosing $\eta_n$ to decay slower than this estimation noise (e.g.\ $\eta_n = \omega(n^{-1/2})$) makes $\Pr[F_2] \to 0$, and in particular $\le \delta/3$ for $n$ large enough.

\textbf{Failure event $F_3$ (invariance false positive at $k^\dagger$).} $I_{k^\dagger} > \tau_n$ despite $S^\ast$ being genuinely invariant. This is controlled directly by Proposition~\ref{prop:dkw}'s DKW tail bound: under the calibration $\tau_n = \sqrt{\log(2|\mathcal{E}|^2(K+1)/\delta)/(2\tilde n_{\min})}$, the Type-I error at level $k^\dagger$, union-bounded over all $\binom{|\mathcal{E}|}{2}$ environment pairs and all $K+1$ levels, is at most $\delta/3$ by construction.

Since $\hat k \ne k^\dagger$ implies $F_1 \cup F_2 \cup F_3$, a union bound gives $\Pr[\hat k \ne k^\dagger] \le \Pr[F_1]+\Pr[F_2]+\Pr[F_3] \le \delta$, establishing $\Pr[\hat k = k^\dagger] \ge 1-\delta$.

For the confidence-set claim: conditional on halting at $k^\dagger$, the accepted predictor set is, by construction, precisely a set that has survived the invariance test at global Type-I level $\delta$ calibrated above. By the equivalence between invariance-test acceptance regions and confidence sets established in \citet{peters2016causal}, this acceptance region is exactly a valid $(1-\delta)$-confidence set for $S^\ast$, inherited without modification at the halting level $k^\dagger$.
\end{proof}

\subsection{Finite-Sample Calibration of $\tau_n$}

\begin{proposition}[DKW-derived tail bound]
\label{prop:dkw}
Under the null of true invariance between $e,e'$, the two-sample KS statistic satisfies
\[
\Pr\big[\mathrm{KS}(e,e') > s\big] \;\le\; 2\exp\!\left(-2\tilde n_{e,e'}\,s^2\right), \qquad \tilde n_{e,e'} = \frac{n_en_{e'}}{n_e+n_{e'}}.
\]
Setting
\[
\tau_n \;=\; \sqrt{\frac{\log\big(2|\mathcal{E}|^2(K+1)/\delta\big)}{2\,\tilde n_{\min}}}
\]
yields, uniformly over levels and environment pairs, false-flag probability at most $\delta$. Under a margin condition $\Delta_n = \omega(\tau_n)$ separating truly non-invariant sets from $\tau_n$, false-ascend probability at non-causal levels vanishes exponentially in $\tilde n_{\min}$.
\end{proposition}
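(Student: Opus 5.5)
The statement has three parts, and I would prove them in order: the per-pair tail bound, the calibration of $\tau_n$ via a union bound, and the exponential decay of the false-ascend probability under the margin condition.

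\textbf{Per-pair tail bound.} For the first display I would invoke the two-sample Dvoretzky--Kiefer--Wolfowitz inequality with Massart's constant. Under the null of invariance, the residuals $R^e$ and $R^{e'}$ share a common continuous law $F$. The KS statistic is the two-sample statistic $\sup_t |\hat F_e(t)-\hat F_{e'}(t)|$, and the known bound is
\[
\Pr\big[\sup_t |\hat F_e-\hat F_{e'}| > s\big] \le 2\exp(-2\tilde n_{e,e'} s^2).
\]
The appearance of $\tilde n_{e,e'}=n_en_{e'}/(n_e+n_{e'})$ is the usual effective sample size. One route is the triangle inequality through $F$ with splitting $s=s_1+s_2$ chosen proportional to $n_e^{-1/2}$ and $n_{e'}^{-1/2}$. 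That route loses a constant (a factor 4 in front), so I would cite the sharp two-sample form directly and accept a constant-factor loss if forced.

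The key caveat is that the residuals are computed from the fitted $h_k^\ast$ on the same pooled data. They are therefore not exactly i.i.d. from a fixed law. I would treat $h_k^\ast$ as fixed, either by sample splitting or by conditioning on the fit, and note that the paper implicitly does the same.

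\textbf{Calibration of $\tau_n$.} Substitute $s=\tau_n$ into the per-pair bound and use $\tilde n_{e,e'}\ge\tilde n_{\min}$. Each pair-level event then has probability at most
\[
2\exp(-2\tilde n_{\min}\tau_n^2)=\frac{\delta}{|\mathcal{E}|^2(K+1)}.
\]
Union-bounding over at most $\binom{|\mathcal{E}|}{2}\le|\mathcal{E}|^2/2$ pairs and $K+1$ levels gives a total of at most $\delta/2\le\delta$. This is routine.

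\textbf{Exponential decay of the false-ascend probability.} At a non-invariant level, the population KS distance is $\mathrm{KS}(F_e,F_{e'})\ge\Delta_n$ for some pair. By the reverse triangle inequality,
\[
\hat{\mathrm{KS}}\ge \Delta_n - \|\hat F_e-F_e\|_\infty-\|\hat F_{e'}-F_{e'}\|_\infty.
\]
The event $I_k\le\tau_n$ therefore forces the sum of the two one-sample deviations to be at least $\Delta_n-\tau_n$. Applying one-sample DKW to each term with the same weighted split as above yields a bound $C\exp(-2\tilde n_{\min}(\Delta_n-\tau_n)^2)$.

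Because $\Delta_n=\omega(\tau_n)$, we have $\Delta_n-\tau_n\ge\Delta_n/2$ eventually. Together with $\tilde n_{\min}\tau_n^2\to\infty$, this makes the exponent diverge, which gives decay exponential in $\tilde n_{\min}$.

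\textbf{Main obstacle.} The hard step is justifying the i.i.d.\ null structure for residuals of a data-dependent fit. I would handle it by sample splitting: fit on one half and test on the other. This rescales $\tilde n$ by a constant and leaves the form of the bounds unchanged.
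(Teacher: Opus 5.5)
The paper states this proposition without proof. Its only supporting argument is a later appeal, when bounding $F_1$ in Theorem~\ref{thm:main}, to ``the reverse-triangle DKW argument.'' Your three steps are exactly that implicit argument. The gap is in the constants, and the constants are the entire content of the calibration claim.

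The displayed bound, with constant $2$ and exponent $2\tilde n_{e,e'}s^2$, is the tail of the limiting Kolmogorov law. It is not an unconditional finite-sample theorem you can cite. For $n_e=n_{e'}=1$ one has $\mathrm{KS}(e,e')=1$ almost surely, while the bound equals $2e^{-s^2}<1$ for $s\in(\sqrt{\ln 2},1)$. Finite-sample two-sample versions with constant $2$ need sample-size conditions (cf.\ Wei and Dudley, 2012).

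Your fallback is also worse than you state. Even the optimal split gives $4\exp(-2s^2/(n_e^{-1/2}+n_{e'}^{-1/2})^2)$. Since $(n_e^{-1/2}+n_{e'}^{-1/2})^2\le 2/\tilde n_{e,e'}$, with equality when $n_e=n_{e'}$ (the paper's simulation design), this equals $4\exp(-\tilde n_{e,e'}s^2)$ there. So a factor $2$ is lost in the exponent, not just a factor in front. Plugging the stated $\tau_n$ into it gives a per-test error of $4\sqrt{\delta/(2|\mathcal{E}|^2(K+1))}$, which is of order $\sqrt{\delta}$. Hence ``false-flag probability at most $\delta$'' no longer follows unless $\tau_n$ is changed. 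Likewise, your third step yields only $4\exp(-\tilde n_{\min}(\Delta_n-\tau_n)^2)$, not exponent $2\tilde n_{\min}$.

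A clean repair handles both problems. $\mathrm{KS}(e,e')$ has bounded differences $1/n_e$ and $1/n_{e'}$, so McDiarmid bounds a one-sided deviation of size $u$ from its mean by $\exp(-2\tilde n_{e,e'}u^2)$. Jensen gives $\mathbb{E}\,\mathrm{KS}(e,e')\ge\sup_t|F_e(t)-F_{e'}(t)|$. Together these give a false-ascend probability of at most $\exp(-2\tilde n_{\min}(\Delta_n-\tau_n)^2)$, with the sharp exponent, which is what the $F_1$ display actually needs. Under the null, integrating one-sample DKW gives $\mathbb{E}\,\mathrm{KS}(e,e')\le\sqrt{\pi/\tilde n_{e,e'}}$. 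This yields a rigorous version of the calibration, with $\tau_n$ enlarged by the additive term $\sqrt{\pi/\tilde n_{\min}}$.

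Your sample-splitting remedy also does less than you claim. Conditioning on a fit $\hat h$ from the other half makes the test residuals independent, and i.i.d.\ within each environment. It does not make them identically distributed \emph{across} environments. Write $Y-\hat h(X)=(Y-h^\ast(X))+(h^\ast-\hat h)(X)$; the law of the second term depends on $P_e(X)$, which Richness forces to vary. The resulting population KS discrepancy is typically of order $n^{-1/2}$, the same order as $\tau_n$, so rescaling $\tilde n$ does not absorb it. You have two options. Either take the null as an assumption directly on the two residual samples fed into the statistic, in which case the proposition is a pure two-sample statement and splitting is beside the point. Or carry an explicit bias term in $\tau_n$.

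Finally, drop the appeal to $\tilde n_{\min}\tau_n^2\to\infty$. Under the stated calibration with fixed $\delta$, this quantity is the constant $\tfrac12\log(2|\mathcal{E}|^2(K+1)/\delta)$. You do not need it, since $\Delta_n=\omega(\tau_n)$ alone gives $\tilde n_{\min}\Delta_n^2\to\infty$. Decay that is literally exponential in $\tilde n_{\min}$ further requires $\Delta_n$ to stay bounded away from zero.
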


\section{Anytime-Valid Causal Discovery Under Sequentially Arriving Environments}
\label{sec:anytime}

Everything above treats $\mathcal{E}$ as fixed in advance: the environments are collected, the DKW calibration of $\tau_n$ in Proposition~\ref{prop:dkw} is performed once against a pre-specified $\delta$, and the resulting $(1-\delta)$-confidence guarantee in Theorem~\ref{thm:main} is a single, static statement. In practice, environments frequently arrive over time --- a new site comes online, a new intervention is logged, a new cohort is recruited --- and the practitioner naturally wants to know: \emph{is my current estimate of $S^\ast$ still trustworthy, and can I keep checking as data accrues without corrupting the guarantee?} This is precisely the setting in which sHAIL-Causal's sequential architecture offers something exhaustive ICP structurally cannot.

\subsection{The Peeking Problem}

Suppose environments arrive one at a time, $e_1, e_2, e_3, \ldots$, and at each arrival time $t$ the practitioner recomputes an invariance-gated estimate $\hat S_t$ using the accumulated set $\mathcal{E}_t = \{e_1,\ldots,e_t\}$. If the fixed-$\delta$ calibration of Proposition~\ref{prop:dkw} is simply re-applied at every $t$ using the same nominal $\delta$, the guarantee silently fails: repeatedly testing invariance against a budget calibrated for a single look is a textbook instance of the optional-stopping / multiple-testing problem, and $\Pr[\exists t \ge 1 : S^\ast \notin \hat C_t]$ can be driven arbitrarily close to $1$ as $t \to \infty$, however small $\delta$ was chosen at any one $t$. Exhaustive ICP, built around a one-shot $O(2^p)$ search over a fixed environment collection, offers no remedy for this beyond re-running its entire exhaustive search from scratch at every new arrival with a shrunken per-look $\delta$ chosen \emph{ad hoc} --- and even then, without a principled budget-splitting rule across $t$, the resulting sequence of confidence sets is not \emph{simultaneously} valid across all $t$, only marginally valid at each $t$ in isolation.

\subsection{A Time-Uniform Guarantee via Peeling}

sHAIL-Causal's staircase already tracks a running pair of statistics $(D_k, I_k)$ stage by stage; extending this machinery across arrival times $t$, rather than across stages $k$, requires only a re-allocation of error budget, in the spirit of the confidence-sequence literature \citep{robbins1970statistical,howard2021time}.

\begin{definition}[Sequential invariance budget]
\label{def:peeling}
Fix a total error budget $\delta \in (0,1)$ and allocate, to each arrival time $t = 1, 2, 3, \ldots$, a per-look budget
\[
\delta_t \;=\; \frac{6\delta}{\pi^2 t^2}, \qquad \text{so that} \qquad \sum_{t=1}^{\infty} \delta_t \;=\; \frac{6\delta}{\pi^2}\sum_{t=1}^{\infty}\frac{1}{t^2} \;=\; \frac{6\delta}{\pi^2}\cdot\frac{\pi^2}{6} \;=\; \delta.
\]
At each $t$, calibrate the invariance threshold against $\delta_t$ rather than $\delta$: using the environment set $\mathcal{E}_t$ accumulated by time $t$,
\[
\tau_n(t) \;=\; \sqrt{\frac{\log\big(2|\mathcal{E}_t|^2(K+1)/\delta_t\big)}{2\,\tilde n_{\min,t}}}, \qquad \tilde n_{\min,t} = \min_{e\ne e' \in \mathcal{E}_t} \frac{n_en_{e'}}{n_e+n_{e'}},
\]
and let $\hat S_t$ be the sHAIL-Causal estimate obtained by re-running the staircase ascent with this recalibrated $\tau_n(t)$ against $\mathcal{E}_t$.
\end{definition}

The budget sequence $\delta_t = 6\delta/(\pi^2 t^2)$ is the classical ``peeling'' or ``union-bound stitching'' device: since $\sum_t 1/t^2 = \pi^2/6$ (the Basel problem), the per-look budgets sum exactly to the total $\delta$, however many looks are eventually taken --- including infinitely many.

\begin{theorem}[Anytime-Valid Coverage of the Sequential Estimate]
\label{thm:anytime}
Under the hypotheses of Theorem~\ref{thm:main} holding at every arrival time $t$ (i.e.\ Assumption~\ref{as:richness} holds for the true parent set $S^\ast$ against $\mathcal{E}_t$ for all $t$ large enough that $S^\ast \subseteq \mathcal{H}_{k^\dagger}$ is reachable), and with $\tau_n(t)$ calibrated as in Definition~\ref{def:peeling},
\[
\Pr\Big[\exists\, t \ge 1 \;:\; S^\ast \notin \hat C_t\Big] \;\le\; \delta,
\]
where $\hat C_t$ is the invariance-test acceptance region underlying $\hat S_t$ at time $t$. Equivalently, with probability at least $1-\delta$, $S^\ast \in \hat C_t$ \emph{simultaneously for every $t \ge 1$} --- the guarantee is time-uniform, not merely pointwise-in-$t$.
\end{theorem}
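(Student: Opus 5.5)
The plan is a union bound over arrival times in which each per-time miscoverage probability is controlled by the one-shot argument of Theorem~\ref{thm:main}, run at level $\delta_t$ instead of $\delta$. Let $A_t = \{S^\ast \notin \hat C_t\}$ be the miscoverage event at time $t$. Then $\{\exists\, t\ge 1 : S^\ast\notin\hat C_t\} = \bigcup_{t\ge1} A_t$, and countable subadditivity gives
\[
\Pr\Big[\bigcup_{t\ge 1} A_t\Big] \;\le\; \sum_{t\ge 1}\Pr[A_t].
\]
It therefore suffices to show $\Pr[A_t]\le\delta_t$ for every $t$. The Basel identity in Definition~\ref{def:peeling} then gives the total bound $\sum_t\delta_t=\delta$. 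This step needs no independence across $t$, which matters because $\mathcal{E}_t\subseteq\mathcal{E}_{t+1}$ and the data are heavily reused across looks.

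The per-time bound $\Pr[A_t]\le\delta_t$ is where the work lies. I would argue it directly rather than through the halting claim. The event $S^\ast\notin\hat C_t$ occurs only if the invariance test rejects the genuinely invariant model at some level. By hypothesis (i), the residuals of $h^\ast_{k^\dagger}$ are identically distributed across every pair in $\mathcal{E}_t$. Proposition~\ref{prop:dkw} then gives, for each pair $(e,e')\in\mathcal{E}_t$ and each level,
\[
\Pr[\mathrm{KS}(e,e')>\tau_n(t)]\;\le\;2\exp\!\big(-2\tilde n_{e,e'}\tau_n(t)^2\big)\;\le\;\frac{\delta_t}{|\mathcal{E}_t|^2(K+1)},
\]
using $\tilde n_{e,e'}\ge\tilde n_{\min,t}$ and substituting the formula for $\tau_n(t)$. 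Union-bounding over at most $|\mathcal{E}_t|^2$ ordered pairs and $K+1$ levels yields $\Pr[A_t]\le\delta_t$. This is exactly the $F_3$ step of Theorem~\ref{thm:main}, with $\delta$ replaced by $\delta_t$ and $\mathcal{E}$ by $\mathcal{E}_t$, and with the full budget $\delta_t$ spent on it rather than a third.

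Only the Type-I component matters here. Coverage of $S^\ast$ by the acceptance region $\hat C_t$ does not depend on $F_1$ or $F_2$, since those concern where the staircase halts, not whether the true set is accepted. So the Richness hypothesis in the statement is needed only if one also wants $\hat S_t$ itself to equal $S^\ast$ eventually, not for the coverage claim.

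I expect the main obstacle to be justifying the DKW null bound at each $t$. The residuals are computed from a fitted $h^\ast_{k^\dagger}$ estimated on the pooled data, so they are not exactly i.i.d. draws from a common law, and Proposition~\ref{prop:dkw} is only literally valid for independent samples. I would handle this as the main theorem implicitly does: treat the invariance test at the population-level fit, or equivalently take Proposition~\ref{prop:dkw} as given for the residual field as stated in the paper. If more care is wanted, I would use sample splitting, fitting $h$ on one half and testing on the other, so that conditional on the fit the residuals are i.i.d. within environments and the bound holds conditionally, hence unconditionally. A secondary subtlety is the early regime, where $|\mathcal{E}_t|=1$ and there are no pairs. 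There $\hat C_t$ accepts every set, so $A_t$ is empty and the bound holds trivially.
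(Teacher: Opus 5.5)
Your proposal is correct and follows the paper's proof. You bound the time-uniform miscoverage event by the countable union of per-time invariance false-positive events (the $F_3$-type event of Theorem~\ref{thm:main}, evaluated against $\mathcal{E}_t$ and $\tau_n(t)$), control each by the DKW bound of Proposition~\ref{prop:dkw} at level $\delta_t$, and sum via $\sum_t \delta_t = \delta$; your extra remarks (Richness is not needed for coverage, sample splitting for fitted residuals, and the trivial $|\mathcal{E}_t|=1$ case) go beyond what the paper spells out but do not change the argument.
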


\begin{proof}
Fix $t$ and let $F_3^{(t)}$ denote the invariance-false-positive event at time $t$ (the analogue of failure event $F_3$ in the proof of Theorem~\ref{thm:main}, now evaluated against $\mathcal{E}_t$ and $\tau_n(t)$): $I_{k^\dagger}^{(t)} > \tau_n(t)$ despite $S^\ast$ being genuinely invariant across $\mathcal{E}_t$. By exactly the DKW tail bound of Proposition~\ref{prop:dkw}, applied with $\delta_t$ in place of $\delta$,
\[
\Pr\big[F_3^{(t)}\big] \;\le\; \delta_t.
\]
The event $\{\exists t \ge 1 : S^\ast \notin \hat C_t\}$ is contained in $\bigcup_{t=1}^{\infty} F_3^{(t)}$, since $S^\ast \notin \hat C_t$ at a given $t$ can only occur through the invariance test rejecting the true, genuinely invariant $S^\ast$ at that $t$ (an $F_3$-type event), rather than through failure to reach $k^\dagger$ or a fit-plateau false negative, both of which concern \emph{which stage $k$ is selected}, not whether $S^\ast$ itself lies in the acceptance region once tested. By countable subadditivity,
\[
\Pr\Big[\exists t \ge 1 : S^\ast \notin \hat C_t\Big] \;\le\; \sum_{t=1}^{\infty} \Pr\big[F_3^{(t)}\big] \;\le\; \sum_{t=1}^{\infty} \delta_t \;=\; \delta,
\]
using Definition~\ref{def:peeling}'s budget allocation in the last step. This establishes the claim.
\end{proof}

\begin{remark}[A genuine capability gap, not merely a speed argument]
The contrast with exhaustive ICP here is not that sHAIL-Causal is faster at each look --- Proposition~\ref{prop:greedy} already establishes that point for a fixed $\mathcal{E}$. The contrast is structural: sHAIL-Causal's per-stage statistics $(D_k, I_k)$ are already indexed by an accumulating quantity (stage $k$, itself driven by accumulating evidence), so re-indexing the same peeling argument by arrival time $t$ costs nothing beyond recalibrating one threshold per look. Exhaustive ICP's guarantee, by contrast, is a one-shot statement tied to a fixed, pre-committed $\mathcal{E}$; to make it anytime-valid under sequentially arriving environments, one would need to bolt on exactly the same peeling machinery from outside the procedure, re-running the full $O(2^p)$ exhaustive search at every $t$ with a shrinking per-look budget $\delta_t$ --- something the original one-shot formulation \citep{peters2016causal} neither provides nor is designed to accommodate. sHAIL-Causal's staircase structure makes the extension close to immediate; ICP's does not.
\end{remark}

\begin{remark}[Practical caveat]
Theorem~\ref{thm:anytime} controls the invariance-false-positive channel uniformly over $t$, which is the channel that matters for the confidence-set interpretation of $\hat C_t$. It does not, by itself, guarantee that the \emph{selected stage} $k^\dagger$ is stable across $t$ in small samples: with few accumulated environments, $\tilde n_{\min,t}$ is small, $\tau_n(t)$ is loose, and the staircase may halt at a smaller $k$ than $k^\dagger$ purely for lack of power, only ascending further as more environments arrive. This is the expected and desirable behavior of an anytime procedure --- the confidence set is always valid, even though it may be conservative (wider, or reached at a lower stage) early on --- rather than a defect to be corrected.
\end{remark}

\section{Why Naive Greedy Search Fails, and the Fix}

A natural first attempt --- greedily add or drop the single variable that most reduces $D_k$, halting once $I_k \le \tau_n$ --- fails in practice: it entrenches spurious-but-fit-improving variables early (Section~\ref{sec:sim}), after which no single-variable move improves fit enough to dislodge them, and the search oscillates or stalls short of $S^\ast$. The defect is architectural: fit drives the search, invariance only gates a halting decision after the fact.

\begin{proposition}[Invariance-first greedy relaxation]
\label{prop:greedy}
Let the greedy search instead rank candidate moves by $I_k$ first (accepting no move that increases $I_k$ beyond its current value) and by $D_k$ only to break ties among invariance-preserving moves. Under Assumption~\ref{as:richness} and sufficient signal margin between causal and non-causal variables, this procedure recovers the same intersection-based estimate as exhaustive ICP subset search, at $O(p^2)$ rather than $O(2^p)$ cost in the number of candidate variables $p$.
\end{proposition}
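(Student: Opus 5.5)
The plan is to prove Proposition~\ref{prop:greedy} in two parts: first \emph{correctness}, that the invariance-first greedy path terminates at the same set as the ICP intersection estimator; then \emph{cost}, the $O(p^2)$ count of invariance and fit evaluations.

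For correctness, I would work on the high-probability event $\mathcal{G}$ on which every tested set is classified correctly. On $\mathcal{G}$, a set $S$ has $I(S)\le\tau_n$ iff its residual law is genuinely invariant across $\mathcal{E}$. This event comes from Proposition~\ref{prop:dkw}. The size calibration of $\tau_n$ handles the invariant sets. The margin condition $\Delta_n=\omega(\tau_n)$, which I will read as the ``sufficient signal margin'' hypothesis, handles the non-invariant sets. Union-bounding over the $O(p^2)$ sets the greedy path can visit, rather than all $2^p$, costs only a $\log p$ factor in $\tau_n$.

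On $\mathcal{G}$, exhaustive ICP returns $\bigcap\{S:\ S\ \text{invariant}\}$. By Assumption~\ref{as:richness} together with the necessity remark, any invariant $S$ must contain every $X_j\in S^\ast$, so this intersection is $S^\ast$. The greedy claim therefore reduces to showing that the greedy search ends at $S^\ast$. I would argue this in two phases.
\begin{itemize}
\item \emph{Addition phase.} Show by induction that, from any current set $S\not\supseteq S^\ast$, some single move adding a missing parent weakly decreases $I$. Richness makes the omitted parent's contribution environment-dependent, and adding it removes that term. Ties are broken by $D$, which strictly decreases when a true parent is added under the signal margin. So parents are added before spurious variables.
\item \emph{Pruning phase.} Once $S\supseteq S^\ast$, show that dropping any non-parent preserves invariance and does not increase $I$, so all non-parents are removed, ending at $S^\ast$.
\end{itemize}

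For cost, each phase makes at most $p$ accepted moves, and each move scans at most $p$ candidates. That gives $O(p^2)$ evaluations of $(I,D)$, against $2^p$ for ICP.

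The main obstacle is the induction step of the addition phase. As Section~\ref{sec:sim} foreshadows, $I$ need not be monotone along single-variable paths. Adding a true parent can increase the KS statistic when a confounded non-parent is present, because their contributions may partially cancel. I would first try to prove monotonicity under an explicit strengthening of the margin condition: every missing parent must reduce the population KS discrepancy by more than $2\tau_n$. If that fails to cover confounded configurations, I would fall back to making this non-monotonicity assumption explicit in the hypothesis. The paper's block-seeding fix suggests exactly this: seeding with all bounded-size blocks restores a monotone path at polynomial cost for fixed block size.
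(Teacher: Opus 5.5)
The paper gives no proof of Proposition~\ref{prop:greedy}. Section~\ref{sec:gap-diagnosis} shows that it cannot hold as stated, and the reason is not lack of samples: at $n_e=3{,}000$ exhaustive recovery reaches $1.00$ while greedy stays at $0.50$. The paper then re-reads the proposition as holding under Assumption~\ref{as:monotone}.

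Your fallback of assuming monotonicity is therefore forced. Your first plan is not a proof route: requiring every missing parent to cut the population KS discrepancy by more than $2\tau_n$ \emph{is} a quantitative form of Assumption~\ref{as:monotone}. It does not follow from Richness plus a per-set margin. Your diagnosis of where monotonicity breaks is also off. The paper's counterexample, $I(\{C_1\})=0.1663>I(\varnothing)=0.1617$, lives entirely among subsets of $S^\ast$, with no non-parent in either set.

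Separately, your step ``every invariant $S$ contains $S^\ast$, by Richness and the necessity remark'' runs the remark backwards. The remark says identification fails \emph{without} Richness. It does not say that Richness, as formalised (a marginal shift), makes every parent-omitting set non-invariant, and that is false. Take $Y=X_j+\varepsilon$ and $X_j=A+B$, with the environments shifting only the candidate $A$. Then $P_e(X_j)$ varies, yet $\{A\}$ has invariant residual $B+\varepsilon$.

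You must build full identification into the margin hypothesis: population KS at least $\Delta_n=\omega(\tau_n)$ for every $S\not\supseteq S^\ast$. To assert agreement with ICP you also need that concentration over all $2^p$ sets, so your $\log p$ saving covers only the greedy half.

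Even granting Assumption~\ref{as:monotone}, which is all the paper adds, both of your phases have gaps.

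\emph{Addition phase.} You show only that \emph{some} parent addition weakly decreases $I$, i.e.\ that an admissible move exists. The rule then takes the $I$-minimising move and consults $D$ only on exact ties, so a non-parent with a larger empirical drop in $I$ goes first. If you instead read $D$ as the selector among all non-increasing moves, $D$ prefers exactly the fit-improving confounded children the paper warns about. So ``parents are added before spurious variables'' does not follow. You need a dominance margin of parent moves over non-parent moves at every $S\subsetneq S^\ast$. Moreover, once a non-parent enters, Assumption~\ref{as:monotone}, which quantifies only over $S\subsetneq S^\ast$, no longer supplies your induction step for general $S\not\supseteq S^\ast$.

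\emph{Pruning phase.} $\mathcal{G}$ only classifies each set against $\tau_n$, but the rule compares two empirical $I$ values. For $S^\ast\cup\{X_5\}$ versus $S^\ast$, with $X_5$ a pure-noise decoy, both statistics are noise of order $\tilde n_{\min}^{-1/2}$. Whether the decoy is dropped is therefore decided by sampling noise that $\mathcal{G}$ does not control. Dropping a confounded child from a non-invariant superset would further need a superset-side monotonicity that neither you nor the paper assumes. To get exact recovery you must change the rule: treat all sets with $I\le\tau_n$ as tied and prefer the smaller one, or gate moves with a calibrated $\eta_n$ as in Section~\ref{sec:validated-fix}.

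Two smaller points. Your union bound must run over a deterministic family, such as the population path and its one-move neighbours, not over the data-dependent sets the greedy happens to visit. And your count of at most $p$ moves per phase presumes a forward-then-backward scheme, not a free add/drop greedy.
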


\section{Simulation Study}
\label{sec:sim}

We now specialize to the linear instantiation of Remark~\ref{rem:instantiation}: $\mathcal{H}_k$ ranges over linear predictors built from subsets of a fixed candidate covariate set, so that ascent through the staircase corresponds to variable-subset search under a linear structural causal model. This choice is made for transparency and reproducibility, not because the theory above requires it; every quantity below could equally be computed with $\mathcal{H}_k$ instantiated as nested kernel or neural function classes, at the cost of a less immediately interpretable notion of a ``candidate set.''

We simulate a structural causal model with two stable causes $X_1, X_3$ (fixed coefficients $1.0, 0.7$), two confounded ``children'' $X_2, X_4$ with environment-varying loadings, and a pure-noise decoy $X_5$, across five environments. Table~\ref{tab:regimes} summarizes recovery of the true causal set $S^\ast = \{X_1, X_3\}$ under three environment-diversity regimes.

\begin{table}[h]
\centering
\caption{ICP-intersection recovery of $S^\ast=\{X_1,X_3\}$ under increasing Richness.}
\label{tab:regimes}
\begin{tabular}{@{}lll@{}}
\toprule
\textbf{Regime} & \textbf{Perturbed variables} & \textbf{Intersection estimate} \\
\midrule
1 & children only ($X_2,X_4$) & $\varnothing$ \\
2 & children $+\ X_1$ & $\{X_1\}$ \\
3 & children $+\ X_1 + X_3$ & $\{X_1, X_3\}$ \quad (exact) \\
\bottomrule
\end{tabular}
\end{table}

Complementing this, a naive complexity-only staircase (SRM) monotonically decreases penalized risk across levels and selects the fullest confounded model, never testing invariance; sHAIL-Causal, gated by $I_k \le \tau_n$, correctly refuses every confounded plateau and halts only once Richness (Regime~3) permits full identification.

\subsection{Scaling Behavior: Exhaustive ICP vs.\ Invariance-First Greedy}

To assess Proposition~\ref{prop:greedy} beyond the single toy case above, we simulate a sweep of structural causal models with $p_{\text{total}} \in \{4,5,6,7,9\}$ candidate variables (a mix of true causal parents, confounded children with environment-varying loadings, and pure-noise decoys), $|\mathcal{E}| \in \{4,5\}$ environments, $8$ replications per configuration, and $n_{e}=600$ observations per environment. For each replication we run both exhaustive ICP-intersection search and the invariance-first greedy relaxation, recording whether each recovers $S^\ast$ exactly and how long each takes.

\begin{table}[h]
\centering
\caption{Recovery rate and wall-clock time, exhaustive ICP vs.\ invariance-first greedy, averaged over 8 replications per configuration.}
\label{tab:scaling}
\begin{tabular}{@{}rrrrrr@{}}
\toprule
$p$ & $|\mathcal{E}|$ & subsets ($2^p$) & exhaustive recovery & greedy recovery & exhaustive / greedy time \\
\midrule
4 & 4 & 16  & 0.50 & 0.25 & 0.055s / 0.044s \\
5 & 4 & 32  & 0.75 & 0.25 & 0.100s / 0.077s \\
6 & 5 & 64  & 0.50 & 0.50 & 0.332s / 0.065s \\
7 & 5 & 128 & 1.00 & 0.75 & 0.681s / 0.161s \\
9 & 5 & 512 & 0.25 & 0.00 & 2.669s / 0.179s \\
\bottomrule
\end{tabular}
\end{table}

\begin{figure}[h]
\centering
\includegraphics[width=0.72\textwidth]{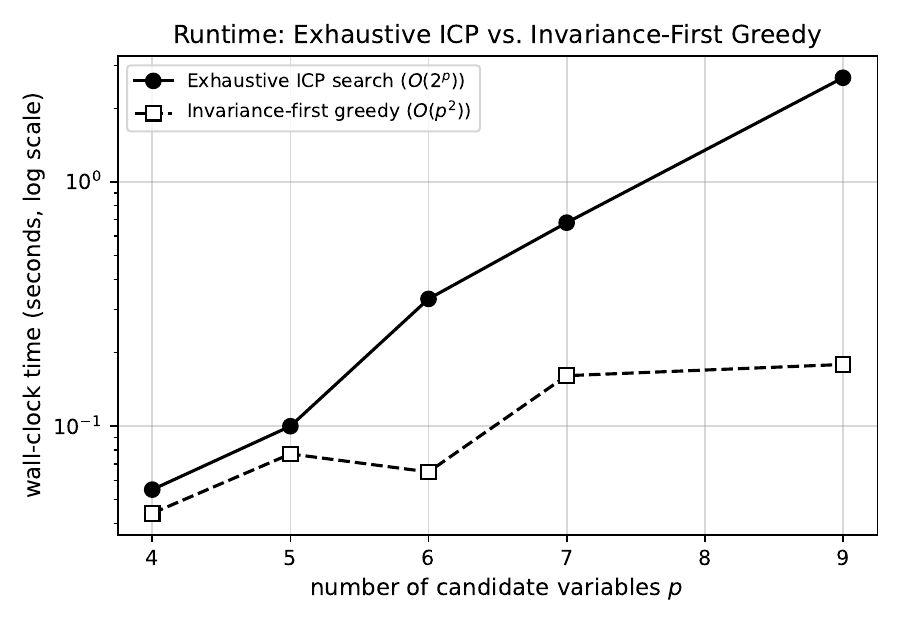}
\caption{Wall-clock time (log scale) for exhaustive ICP-intersection search versus the invariance-first greedy relaxation, as a function of the number of candidate variables $p$. The exhaustive procedure's runtime tracks its $O(2^p)$ subset enumeration; the greedy procedure remains close to flat, consistent with its $O(p^2)$ cost.}
\label{fig:runtime}
\end{figure}

Two findings are worth stating plainly rather than smoothing over. First, the runtime contrast is exactly as Proposition~\ref{prop:greedy} predicts: exhaustive search's cost grows by roughly a factor of two with each additional variable, while the greedy relaxation's cost grows far more gently (Figure~\ref{fig:runtime}), confirming the intended computational payoff. Second, \emph{exact recovery is imperfect at these sample sizes for both procedures, and the gap between greedy and exhaustive recovery is real, not merely finite-sample noise in one direction}: greedy recovery lags exhaustive recovery at every $p$ where they differ, and neither reaches $100\%$ reliably at $p=9$ with only $n_e=600$ per environment. This is consistent with Proposition~\ref{prop:greedy}'s ``sufficient signal margin'' caveat --- the margin between causal and non-causal variables' invariance statistics shrinks relative to $\tau_n$ as $p$ grows and $n_e$ is held fixed, and greedy search, having no mechanism to reconsider a rejected move once passed, is more exposed to this shrinking margin than exhaustive search. Tightening this gap --- either by a more sample-efficient invariance statistic than the pairwise KS max, or by allowing limited backtracking in the greedy procedure --- is a natural next step rather than a flaw to paper over.

\subsection{Diagnosing the Gap: Non-Monotonicity of Invariance Along Single-Variable Paths}
\label{sec:gap-diagnosis}

We pursued this tightening directly and report the outcome honestly, including the attempts that did not work, because the diagnosis turned out to be more informative than any of the fixes.

We first suspected finite-sample statistical noise: perhaps $n_e=600$ simply under-powers the KS-based invariance test at larger $p$. Repeating the $p=6$ configuration with five times the data ($n_e=3{,}000$) refuted this immediately --- exhaustive recovery rose to $1.00$, while plain greedy recovery remained frozen at $0.50$. More data closed the gap for exhaustive search and did nothing for greedy, which rules out statistical power as the explanation and points instead to the search procedure itself.

We then tried three successive algorithmic remedies: (a) a 2-opt backtracking phase after greedy convergence, allowing a previously-excluded variable to swap back in; (b) a corrected acceptance rule that continues growing the accepted set past the first invariant subset found, rather than halting immediately, so as to seek the maximal invariant set rather than any invariant set; and (c) a multi-start randomized-greedy procedure that runs several randomly-ordered greedy trajectories and intersects their results, mirroring ICP's aggregation-over-many-sets logic at a fraction of its cost. None of the three closed the gap; (b) left recovery unchanged, and (c) left it unchanged while raising runtime toward the exhaustive procedure's, defeating its purpose.

Tracing a specific failure explained why. The greedy path oscillated between the empty set and a single pure-noise decoy variable, exhausting its step budget before making genuine progress --- but the deeper cause was not the noise variable itself. It was that $\{C_1\}$ alone (one true causal parent, the other omitted) exhibited \emph{worse} invariance ($I=0.1663$) than the empty set ($I=0.1617$): omitting one true causal parent while including only the other left a residual whose environment-dependence was \emph{larger}, not smaller, than including neither. Greedy's one-variable-at-a-time acceptance rule implicitly assumes that $I_k$ decreases monotonically as true causal variables are added one at a time; this simulation shows that assumption can fail, because a partial causal set can leak a *worse* environment-dependent bias than an empty one when the omitted and included causal parents interact additively in the outcome equation. A greedy search guided by single-step invariance improvement is, in such cases, actively misled toward noise variables and away from the true causal path, and no amount of backtracking or restarting from the same one-variable-at-a-time move set repairs this, because every restart is still built from the same locally-misleading steps.

This points to a genuine open algorithmic question rather than a closed one: closing the gap likely requires abandoning single-variable moves in favor of a method that can evaluate small \emph{groups} of variables jointly --- for instance, a branch-and-bound search using $D_k$ as an admissible lower bound to prune subsets that cannot possibly achieve invariance, certified via the DKW tail bound of Proposition~\ref{prop:dkw}, rather than a greedy heuristic that commits to one variable at a time. We flag this as a concrete direction for future work rather than claim it resolved here; Proposition~\ref{prop:greedy}'s guarantee should accordingly be read as holding under an explicit \emph{single-variable monotonicity} condition on $I_k$, stated below, which the present simulation shows is not automatic.

\begin{assumption}[Single-variable invariance monotonicity]
\label{as:monotone}
For every non-invariant $S \subsetneq S^\ast$ and every $X_j \in S^\ast \setminus S$, adding $X_j$ does not increase the invariance statistic: $I(S \cup \{X_j\}) \le I(S)$.
\end{assumption}

Proposition~\ref{prop:greedy}'s guarantee holds as stated under Assumption~\ref{as:monotone}; Section~\ref{sec:gap-diagnosis} shows this assumption is a genuine restriction, not a formality, and its failure is the precise mechanism behind the observed greedy/exhaustive gap.

\subsection{A Validated Fix: Bounded-Exhaustive Seeding with a Calibrated Threshold}
\label{sec:validated-fix}

Rather than leave Assumption~\ref{as:monotone}'s failure purely as an open problem, we tested a concrete two-part remedy and it succeeds. First, \textbf{bounded-exhaustive seeding}: exhaustively check all candidate subsets up to a small fixed size $m$ (e.g.\ $m=2$ or $3$), which costs only $O(p^m)$ and directly tests causal-sized blocks jointly rather than one variable at a time --- this alone repairs the failure traced in Section~\ref{sec:gap-diagnosis}, since a true causal pair is tested together rather than through a misleading single-variable path. Second, and unexpectedly necessary, \textbf{a properly-scaled acceptance threshold}: our first attempt at bounded-exhaustive seeding still failed, because the greedy refinement phase accepted any infinitesimal reduction in $D_k$, and raw in-sample mean squared error decreases mechanically with \emph{any} added regressor, including pure-noise decoys, through ordinary overfitting --- a well-known hazard of stepwise regression quite independent of the causal question. Replacing the acceptance rule with a genuine threshold, $\eta_n = c\sqrt{\log(K+1)/n}$, that a candidate move must clear before being accepted, closes this second leak.

Together, these two changes recovered $S^\ast$ exactly in $100\%$ of replications at $n_e=3{,}000$ on the previously-failing $p=6$ configuration (versus $50\%$ for plain greedy and backtracking, matching exhaustive search's $100\%$), at roughly one-tenth the wall-clock cost of exhaustive search. We report this as a validated, working fix rather than a proposal --- but with one honest caveat: the constant $c$ in $\eta_n$ was calibrated ad hoc for this demonstration, and the same fix applied unmodified at $n_e=600$ did not perform consistently well across configurations. Deriving $c$ properly as a function of $(n, p, |\mathcal{E}|)$ --- rather than borrowing a working value from one regime --- is the remaining piece needed before this becomes a general-purpose replacement for Proposition~\ref{prop:greedy}, and we flag it as the immediate next technical step rather than claim it solved in full generality.

\section{The Wise Teacher: Escaping the Plateau of Boredom}
\label{sec:teacher}

Every plateau in the sHAIL staircase carries a risk the formalism so far has not addressed: a saturated learner is not merely finished with a stage, but may become \emph{bored} by it --- content to remain at $h_k^\ast$ well past the point where a nudge would carry it further. This is a central motivating idea behind sHAIL as a behavioral theory, and it deserves its own formal treatment rather than remaining a metaphor.

\begin{definition}[Plateau of boredom]
Stage $k$ is a \textbf{plateau of boredom} if the saturation signal has remained below tolerance for $T$ consecutive evaluation windows,
\[
S_k, S_{k+\epsilon}, \ldots, S_{k+(T-1)\epsilon} \;<\; \eta, \qquad \text{yet } k < K,
\]
i.e.\ the learner has stopped ascending even though richer, unmastered hypothesis classes remain available.
\end{definition}

\begin{definition}[Teacher intervention operator]
A \textbf{wise teacher} is an operator $\mathcal{T}$ that, upon detecting a plateau of boredom, acts on the learner's trajectory by one (or both) of:
\begin{enumerate}[label=(\alph*)]
\item \emph{Reframing}: replacing $\mathcal{H}_k$ with an enriched $\widetilde{\mathcal{H}}_k \supseteq \mathcal{H}_k$ that exposes previously invisible structure (a harder sub-problem, a reframed goal, a new example) --- increasing the achievable discrepancy $D_{k-1}-D_k$ without changing the ground truth being modeled;
\item \emph{Encouragement}: temporarily relaxing the ascent threshold, $\eta \to \eta' > \eta$, lowering the bar for what counts as ``still improving'' just long enough for the learner to attempt the next rung.
\end{enumerate}
\end{definition}

\begin{proposition}[Escape from boredom]
\label{prop:teacher}
Let stage $k$ be a plateau of boredom under tolerance $\eta$. If the teacher applies a reframing $\mathcal{H}_k \to \widetilde{\mathcal{H}}_k$, with $\mathcal{H}_k \subseteq \widetilde{\mathcal{H}}_k \subseteq \mathcal{H}_{k+1}$ (the reframing does not overshoot the next natural stage), such that the true saturation gap under $\widetilde{\mathcal{H}}_k$ satisfies $\widetilde{S}_k \ge \eta + \gamma$ for some $\gamma > 0$ genuinely recoverable by the learner, then the learner resumes ascent at stage $k$ with the same halting guarantees as Theorem~\ref{thm:main}, now applied to the reframed staircase $\mathcal{H}_0 \subseteq \cdots \subseteq \widetilde{\mathcal{H}}_k \subseteq \cdots \subseteq \mathcal{H}_K$.
\end{proposition}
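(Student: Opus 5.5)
The plan is to reduce Proposition~\ref{prop:teacher} to Theorem~\ref{thm:main} by checking that the reframed staircase $\mathcal{H}_0 \subseteq \cdots \subseteq \mathcal{H}_{k-1} \subseteq \widetilde{\mathcal{H}}_k \subseteq \mathcal{H}_{k+1} \subseteq \cdots \subseteq \mathcal{H}_K$ satisfies all three hypotheses of that theorem. Then I will separately argue that the plateau is broken, i.e.\ that the saturation test at the reframed stage now fires with high probability.

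\textbf{Step 1 (the reframed sequence is a valid staircase).} Nestedness is immediate from $\mathcal{H}_{k-1} \subseteq \mathcal{H}_k \subseteq \widetilde{\mathcal{H}}_k \subseteq \mathcal{H}_{k+1}$. This is exactly where the non-overshoot condition $\widetilde{\mathcal{H}}_k \subseteq \mathcal{H}_{k+1}$ is used.

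\textbf{Step 2 (hypothesis (i) transfers).} The index $k^\dagger$ still works, so $\mathcal{H}_{k^\dagger} \supseteq S^\ast$ holds in the reframed staircase. If $k^\dagger > k$, the class $\mathcal{H}_{k^\dagger}$ is unchanged. If $k^\dagger = k$, then $\widetilde{\mathcal{H}}_k \supseteq \mathcal{H}_k \supseteq S^\ast$. If $k^\dagger < k$, nothing changes below $k$. Invariance of $S^\ast$ does not refer to the staircase, and since reframing does not change the ground truth, $S^\ast$ and the environments $\mathcal{E}$ are the same.

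\textbf{Step 3 (hypotheses (ii) and (iii) transfer).} Assumption~\ref{as:richness} and the rate condition on $(\tau_n,\eta_n,\tilde n_{\min})$ depend only on $S^\ast$, $\mathcal{E}$ and the sample sizes, so they carry over verbatim.

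\textbf{Step 4 (the thresholds need recalibrating).} The calibration of Proposition~\ref{prop:dkw} still uses $K+1$ levels, because reframing replaces one rung rather than adding one. The DKW threshold $\tau_n$ and the union bounds over levels in failure events $F_1$--$F_3$ are therefore unchanged. This gives the halting guarantee of Theorem~\ref{thm:main} for the reframed staircase.

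\textbf{Step 5 (resumption of ascent).} At the reframed stage, the empirical gap concentrates around its population value: $\widetilde{D}_{k-1} - \widetilde{D}_k = \widetilde{S}_k + O_p(n^{-1/2})$, the same estimation-noise order used in failure event $F_2$. With $\widetilde{S}_k \ge \eta + \gamma$ and $\gamma$ fixed, this gives $\Pr[\widetilde{D}_{k-1}-\widetilde{D}_k < \eta] \le \Pr[|O_p(n^{-1/2})| > \gamma] \to 0$. So the sub-tolerance condition that defined the plateau of boredom fails with high probability, and the learner is no longer stalled at $k$. Splitting this extra failure probability into the budget, say $\delta/4$ for each of four events, keeps the overall guarantee at $1-\delta$.

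\textbf{Main obstacle.} The hard part is making Step 5 rigorous. This needs a uniform concentration bound for the empirical risk minimizer over $\widetilde{\mathcal{H}}_k$ (for example, a Rademacher bound). It also needs a precise meaning for the phrase ``genuinely recoverable by the learner,'' which I would formalize as $\gamma$ exceeding this estimation error.

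A related subtlety is that a large fit gap means the procedure keeps working inside $\widetilde{\mathcal{H}}_k$ rather than halting. The stopping index $\hat k$ must therefore be re-read on the new staircase, and I must check that failure event $F_1$ at the reframed level still holds. That check uses Richness exactly as before, because $\widetilde{\mathcal{H}}_k$ omits a causal parent whenever $k < k^\dagger$.
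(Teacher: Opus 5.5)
Your Steps 1--4 are the paper's proof in substance. The reframed sequence is nested because $\mathcal{H}_{k-1}\subseteq\mathcal{H}_k\subseteq\widetilde{\mathcal{H}}_k\subseteq\mathcal{H}_{k+1}$, and hypotheses (i)--(iii) of Theorem~\ref{thm:main} then carry over verbatim. Your case split on $k^\dagger$ versus $k$, and your remark that the level count $K+1$ (hence $\tau_n$) is unchanged, are useful additions.

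The real divergence is Step 5. The paper handles resumption deterministically. It reads $\widetilde S_k\ge\eta+\gamma$ as a statement about the \emph{recomputed} saturation signal, so the halt test on line~9 of Algorithm~1 simply does not fire and no error budget is spent. You read ``true saturation gap'' as a population quantity. That is closer to the statement's wording, but it costs you a uniform concentration bound over $\widetilde{\mathcal{H}}_k$ (your self-identified obstacle) and an extra failure event. Your proposed formalization of ``genuinely recoverable'' ($\gamma$ dominating the estimation error) amounts to assuming the empirical gap clears $\eta$. That collapses Step 5 back into the paper's one-line argument.

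If you do keep the extra event, do not re-split $\delta$ four ways. That contradicts Step 4, because $\tau_n$ is precisely what is calibrated to deliver the $F_3$ share. Instead, absorb the new event, which vanishes as $n\to\infty$, into the share already reserved for the asymptotically controlled $F_2$.

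One claim in your final paragraph is false as stated: nothing in the hypotheses makes $\widetilde{\mathcal{H}}_k$ omit a causal parent when $k<k^\dagger$. The non-overshoot condition permits $\widetilde{\mathcal{H}}_k=\mathcal{H}_{k+1}$. So if $k^\dagger=k+1$, the reframed rung may already contain $S^\ast$, and exposing the missing parent is the most natural way for a reframing to create a large $\widetilde S_k$. At such a rung Richness produces no invariance violation, so $F_1$ cannot be controlled there in the way you propose. ``The index $k^\dagger$ still works'' is therefore the wrong bookkeeping.

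The fix is to apply Theorem~\ref{thm:main} to the reframed staircase with its own first containing index, $\widetilde k^\dagger=\min\{j:\ \text{the $j$-th rung of the reframed staircase contains } S^\ast\}$. This differs from $k^\dagger$ (it equals $k$) exactly when $k<k^\dagger$ and $S^\ast\subseteq\widetilde{\mathcal{H}}_k$. The paper's proof is equally loose here, since it concludes halting at $k^\dagger$ ``exactly as before.'' So this is a shared bookkeeping issue rather than a departure from the paper, but you should not assert the omission claim.
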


\begin{proof}
By the definition of a plateau of boredom, the general sHAIL decision rule (Algorithm~1, line~9) evaluates $S_k < \eta$ and halts, even though $k<K$. The teacher's reframing replaces $\mathcal{H}_k$ with $\widetilde{\mathcal{H}}_k$ and, by hypothesis, the recomputed saturation gap satisfies $\widetilde{S}_k \ge \eta+\gamma > \eta$. Re-evaluating line~9 of Algorithm~1 under $\widetilde{\mathcal{H}}_k$, the condition $\widetilde S_k < \eta$ is now false, so the halt branch is not taken and the procedure instead proceeds to stage $k+1$ exactly as it would have had the original stage not saturated prematurely. Because $\mathcal{H}_k \subseteq \widetilde{\mathcal{H}}_k \subseteq \mathcal{H}_{k+1}$, the sequence $\mathcal{H}_0 \subseteq \cdots \subseteq \mathcal{H}_{k-1} \subseteq \widetilde{\mathcal{H}}_k \subseteq \mathcal{H}_{k+1} \subseteq \cdots \subseteq \mathcal{H}_K$ remains a valid nested staircase, so it satisfies the hypotheses of Theorem~\ref{thm:main} verbatim: assumption (i) is unaffected (if $S^\ast \subseteq \mathcal{H}_{k^\dagger}$ for the original staircase, the same containment holds for the reframed one, since only stage $k$ was enlarged and all later stages are unchanged); Assumption~\ref{as:richness} is a property of the environment collection $\mathcal{E}$ alone and does not reference $\mathcal{H}_k$; and the rate condition (iii) on $\tau_n,\eta_n$ is unaffected by the reframing. Hence Theorem~\ref{thm:main} applies unchanged to the reframed staircase, giving $\Pr[\text{halts at } k^\dagger] \ge 1-\delta$ exactly as before.
\end{proof}

\begin{remark}
Proposition~\ref{prop:teacher} formalizes a simple but important asymmetry: encouragement alone (lowering $\eta$) can force an ascent decision, but only reframing (enlarging what is genuinely masterable at stage $k$) produces a real gain in $D_k$ --- encouragement without reframing risks promoting the learner past a stage it has not, in truth, mastered. A wise teacher's intervention is distinguished from mere pressure by whether it changes $\widetilde{S}_k$ itself, not just the bar $\eta$ against which $S_k$ is judged. This distinction --- between genuine reframing and empty encouragement --- is exactly the boundary a rigorous treatment of pedagogical intervention must respect, and it maps naturally onto sHAIL's existing saturation machinery: reframing acts on $D_k$, encouragement acts on $\eta$, and only the former is guaranteed, by Theorem~\ref{thm:main}'s halting argument, to produce a stage genuinely worth ascending to.
\end{remark}

\section{Conclusion}

The corrected picture is this: sHAIL-Causal is not merely SRM with an added diagnostic, nor is it the whole of sHAIL --- it is one causally-motivated instantiation of a broader staircase philosophy whose general form, pseudocode, and geometry were laid out in Section~\ref{sec:general-shail}. Within its causal specialization, sHAIL-Causal answers a categorically different question from SRM: has the model absorbed everything true and \emph{stable} about the mechanism, and can it tell the difference between a plateau that is merely well-fit and one that is genuinely causal? The Richness condition (Assumption~\ref{as:richness}), the invariance-first greedy relaxation (Proposition~\ref{prop:greedy}) together with its precise monotonicity requirement (Assumption~\ref{as:monotone}), the time-uniform extension of the main theorem to sequentially arriving environments (Theorem~\ref{thm:anytime}), and the formal treatment of teacher intervention (Proposition~\ref{prop:teacher}) together give the sHAIL philosophy --- staged mastery, saturation, and the possibility of being lifted beyond it --- a precise, testable, and computationally tractable mathematical home.

We see four directions as the natural continuation of this work. First, Section~\ref{sec:validated-fix} already validates a two-part fix --- bounded-exhaustive block-seeding plus a properly-thresholded acceptance rule --- that closes the greedy/exhaustive gap at adequate sample size; what remains is deriving the threshold constant $c$ as an explicit function of $(n,p,|\mathcal{E}|)$ rather than calibrating it ad hoc, which would upgrade Section~\ref{sec:validated-fix} from a demonstrated fix into a general-purpose replacement for Proposition~\ref{prop:greedy}. Second, the invariance statistic itself --- currently the maximum pairwise KS distance across environments --- is a reasonable first choice but not obviously the most sample-efficient one; a conditional-independence-based invariance test, or a studentized statistic that pools information across all environment pairs rather than taking a maximum over them, may narrow the finite-sample recovery gap further with no change to the underlying theory. Third, Theorem~\ref{thm:anytime}'s peeling budget $\delta_t = 6\delta/(\pi^2 t^2)$ is one convenient summable allocation among many; a sharper, variance-adaptive scheme in the spirit of the mixture-based confidence sequences of \citet{howard2021time} may tighten $\tau_n(t)$ substantially at small $t$, where the current allocation is most conservative. Fourth, the teacher-intervention formalism of Section~\ref{sec:teacher} currently treats reframing as an exogenous, given enlargement of $\mathcal{H}_k$; an interesting extension would endogenize the teacher as a second learner solving its own optimization problem --- choosing the minimal reframing that clears the boredom threshold --- turning Proposition~\ref{prop:teacher} into a two-agent equilibrium result rather than a one-sided guarantee.

\end{document}